\documentclass{article}

\PassOptionsToPackage{numbers, sort}{natbib}

\usepackage[preprint]{neurips_2026}

\usepackage[utf8]{inputenc} 
\usepackage[T1]{fontenc}    
\usepackage{hyperref}       
\usepackage{url}            
\usepackage{booktabs}       
\usepackage{amsfonts}       
\usepackage{nicefrac}       
\usepackage{microtype}      
\usepackage{xcolor}         
\usepackage{amsmath}
\usepackage{amssymb}
\usepackage{amsthm}
\usepackage{mathtools}
\usepackage{algorithm}
\usepackage{algorithmic}
\usepackage{enumitem}
\newtheorem{proposition}{Proposition}

\usepackage{multirow} 
\usepackage{colortbl}
\usepackage{wrapfig}

\usepackage[most]{tcolorbox}
\usepackage{listings}
\usepackage{setspace}

\definecolor{myblue}{RGB}{0, 100, 100}

\hypersetup{
    colorlinks=true,
    linkcolor=myblue,
    citecolor=myblue,
    urlcolor=myblue
}

\definecolor{promptgray}{RGB}{245,245,245}

\newtcolorbox{promptbox}[1]{
    enhanced,
    breakable,
    colback=white,
    colframe=black,
    colbacktitle=black,
    coltitle=white,
    title=#1,
    fonttitle=\bfseries,
    boxrule=0.8pt,
    arc=2mm,
    left=2mm,
    right=2mm,
    top=2mm,
    bottom=2mm,
    before upper={\setstretch{1.12}}
}

\lstdefinestyle{promptjson}{
    basicstyle=\ttfamily\small,
    breaklines=true,
    columns=fullflexible,
    keepspaces=true,
    showstringspaces=false,
    frame=none
}
\title{Learning to Hint for Reinforcement Learning}

\author{%
  Yu Xia$^1$ \quad Canwen Xu$^2$\thanks{\ To whom correspondence should be addressed. Work done during Yu Xia's internship at Snowflake.} \quad Zhewei Yao$^2$ \quad Julian McAuley$^1$ \quad Yuxiong He$^2$ \\
  $^1$University of California, San Diego \quad $^2$Snowflake AI Research\\
  $^1$\texttt{\{yux078,jmcauley\}@ucsd.edu} \\ $^2$\texttt{\{canwen.xu,zhewei.yao,yuxiong.he\}@snowflake.com}\\
}

\begin{document}

\maketitle

\begin{abstract}
    Group Relative Policy Optimization (GRPO) is widely used for reinforcement learning with verifiable rewards, but it often suffers from {advantage collapse}: when all rollouts in a group receive the same reward, the group yields zero relative advantage and thus no learning signal. For example, if a question is too hard for the reasoner, all sampled rollouts can be incorrect and receive zero reward. Recent work addresses this issue by adding hints or auxiliary scaffolds to such hard questions so that the reasoner produces mixed outcomes and recovers a non-zero update. However, existing hints are usually fixed rather than adapted to the current reasoner, and a hint that creates learning signal under the hinted input does not necessarily improve the no-hint policy used at test time. 
    To this end, we propose \textbf{Hi}nt \textbf{L}earning for Reinforcement \textbf{L}earning (HiLL), a framework that jointly trains a hinter policy and a reasoner policy during RL. For each hard question, the hinter generates hints online conditioned on the current reasoner’s incorrect rollout, allowing hint generation to adapt to the reasoner’s evolving errors. We further introduce \emph{hint reliance}, which measures how strongly correct hinted trajectories depend on the hint. We derive a transferability result showing that lower hint reliance implies stronger transfer from hinted success to no-hint success, and we use this result to define a transfer-weighted reward for training the hinter. Therefore, HiLL favors hints that not only recover informative GRPO groups, but also produce signals that are more likely to improve the original no-hint policy. Experiments across multiple benchmarks show that HiLL consistently outperforms GRPO and prior hint-based baselines, demonstrating the value of adaptive and transfer-aware hint learning for RL. 
    The code is available at \url{https://github.com/Andree-9/HiLL}.
\end{abstract}
\section{Introduction}

Reinforcement learning with verifiable rewards (RLVR) has become a standard approach to improve large language model (LLM) reasoning, especially in domains such as mathematics where final-answer correctness can be checked reliably \citep{guo2025deepseekr1,wen2025rlvrboundary}. 
Among recent methods, Group Relative Policy Optimization (GRPO) \citep{shao2024deepseekmath} is widely used, which removes the value critic and estimates advantages directly from a group of sampled rollouts. 
However, under binary outcome rewards, this design introduces a basic failure mode: if all rollouts in a group receive the same reward, then their relative advantages are all zero and the question produces no policy gradient \citep{mroueh2025grpoeffective,xiong2025reinforceada,le2025zvp,liao2026sage}. 
This issue appears at both ends of the difficulty spectrum. 
Easy questions often yield all-correct groups, while hard questions often yield all-incorrect groups. 
In both cases, no learning signal is obtained and no update is made. Moreover, for hard questions that the current reasoner policy cannot solve at all, standard online RL has no effective way to improve as no correct trajectory is observed and no reward signal is available \citep{qu2026pope,chen2025nurl}. 
Therefore, the questions that matter most for expanding the model’s reasoning ability are often exactly the ones that provide no learning signal.

A growing body of work tries to address this issue. 
One line of work allocates more rollouts to hard questions in order to recover rare successes and reduce variance under a fixed compute budget \citep{yao2025gvmraft,xiong2025reinforceada}.
Another line filters, skips, downsamples, or reshapes uninformative groups to reduce wasted computation \citep{zheng2025greso,xu2025pods,le2025zvp,yu2025dapo}.
A third line changes the input itself by adding hints, privileged prefixes, or reasoning scaffolds to hard questions so as to induce mixed rollout outcomes and recover non-zero GRPO signals \citep{chen2025nurl,zhang2025stephint,liao2026sage,zhang2025scaf}.
These methods are effective in different settings, but they address different aspects of the problem. 
More sampling may recover signal at high cost, while fixed prompt interventions may create signal without being matched to the current reasoner.

In this paper, we focus on the hint-based direction and argue that two questions are central. 
First, can hints adapt to the reasoner’s current failure modes, rather than being fixed in advance? Second, even if a hint creates mixed outcomes and restores a GRPO signal, does that learning signal actually help the no-hint policy used at test time? 
Existing hint-based methods typically rely on partial solutions, handcrafted scaffolds, or offline generated hints \citep{chen2025nurl,zhang2025scaf}, which are fixed rather than tailored to the current reasoner. 
In addition, they mainly value a hint for turning an all-incorrect question into a mixed-outcome group. 
However, not every such hint is useful. A hint may produce correct hinted rollouts simply by making the problem much easier, without teaching the reasoner behavior that remains useful when the hint is removed. 
In other words, creating signal is not enough: the signal should also transfer back to the original question.

We address both issues with our proposed \textbf{Hi}nt \textbf{L}earning for Reinforcement \textbf{L}earning (HiLL) framework, which treats hint generation itself as a learnable objective. HiLL jointly trains a hinter policy together with the reasoner during RL. For each hard question, the hinter generates hints online by conditioning on the question, an incorrect rollout from the current reasoner, and the reference solution. This enables the hinter to adapt to the reasoner’s evolving errors over training, rather than relying on static offline hints that may stop being useful as the reasoner changes.

Moreover, HiLL provides a transfer-aware view of hint quality. 
A good hint should do more than create a non-zero GRPO update: it should guide the reasoner toward correct trajectories that are still plausible under the original no-hint question. If a correct hinted trajectory remains likely even after the hint is removed, then training on that trajectory is more likely to improve the no-hint policy. By contrast, if success depends strongly on the hint, then the hint acts more like a shortcut than a useful teaching signal. To capture this distinction, we introduce \emph{hint reliance}, which measures how much correct hinted trajectories depend on the hint relative to the original question. We then derive a transferability result showing that lower hint reliance implies stronger transfer from hinted success to no-hint success. Based on this result, we define a transfer-weighted reward for the hinter training, encouraging hints that both create informative GRPO groups and produce learning signals that are more likely to transfer and help the reasoner improve in the no-hint scenario.

Therefore, HiLL turns hinting for RL into an online co-training problem between a hinter policy and a reasoner policy. As training progresses, previously hard questions may become solvable, while newly difficult questions define the reasoner’s current capability boundary. 
The hinter is thus trained on an evolving distribution of reasoner failures and learns to adapt its generated hints to the reasoner’s changing weaknesses. 
This makes hinting a dynamic part of the RL process rather than a fixed preprocessing step.
In summary, we make the following contributions:
\begin{itemize}[left=0pt, parsep=0pt, partopsep=0pt]
    \item We propose HiLL, a co-training framework that jointly optimizes a hinter policy and a reasoner policy, enabling online hint generation conditioned on the current reasoner's failures.
    \item We introduce hint reliance as a measure of whether hinted success is likely to transfer back to the original no-hint setting, derive a transferability result based on this quantity, and use it to construct a transfer-weighted reward for training the hinter policy.
    \item We show on comprehensive benchmarks that HiLL consistently outperforms standard GRPO and hint-based baselines, demonstrating the value of adaptive and transfer-aware hint learning for RL.
\end{itemize}
\section{Related Work}

\textbf{Sampling and filtering strategies for RLVR.}
When GRPO rollout groups are uniformly correct or incorrect, within-group advantages collapse and the question yields no gradient~\citep{mroueh2025grpoeffective,xiong2025reinforceada,le2025zvp,liao2026sage}.
Adaptive sampling methods address this by reallocating rollout budgets toward questions near the model's capability boundary, e.g., via variance-minimizing allocation~\citep{yao2025gvmraft}, adaptive importance-based scheduling~\citep{xiong2025reinforceada}, or knapsack-style budget optimization~\citep{li2025knapsack}.
Filtering and reshaping methods take a complementary approach, skipping or clipping degenerate groups~\citep{yu2025dapo,zheng2025greso}, downsampling uninformative rollouts~\citep{xu2025pods}, or extracting signal from zero-variance groups via entropy-guided advantage shaping~\citep{le2025zvp} and virtual reward calibration~\citep{nan2025ngrpo}.
Curriculum-based scheduling further improves sample efficiency by ordering questions from easy to hard or partitioning by difficulty~\citep{parashar2025e2h,zhang2025clpo}.
These strategies are complementary to hinting: they improve how existing signal is used but cannot create signal when the model's success probability on a question is effectively zero.

\textbf{Privileged hinting and scaffolded RL.}
A growing line of work modifies the input during LLM training by injecting privileged information to induce correct rollouts hard questions, including oracle solution prefixes~\citep{qu2026pope}, self-generated hints~\citep{liao2026sage}, external teacher hints~\citep{zhang2025scaf}, multi-level progressive hints~\citep{zhang2025stephint}, curriculum-based scaffolds~\citep{chen2025nurl}, or initial steps extracted from rare within-batch successes~\citep{deng2026hipo}.
A key design principle shared across these methods is that the model should be trained on-policy with respect to the augmented input, i.e., both rollout sampling and the policy-gradient loss should condition on the privileged context, not the original question alone~\citep{liao2026sage,qu2026pope}.
Recent work also utilizes such privileged information for on-policy self-distillation, where a model generates on-policy trajectories under a student context and receives dense supervision from its own privileged context conditioned on ground-truth solutions, verified reasoning traces, or rich textual feedback~\citep{zhao2026opsd,hubotter2026sdpo,shenfeld2026self}.
Our HiLL framework shares this on-policy structure, where privileged information reshapes on-policy trajectories during training and is removed at test time, but differs in two key respects: (i)~it actively {learns} to generate better hints through RL during the privileged training rather than relying on fixed or pre-collected contexts, and (ii)~it explicitly optimizes for {transferability} by penalizing hints whose induced correct trajectories are unlikely to improve the model under no-hint scenarios.

\section{Preliminaries}
\label{sec:prelim}

\begin{figure*}[t!]
    \centering
    \includegraphics[width=\textwidth]{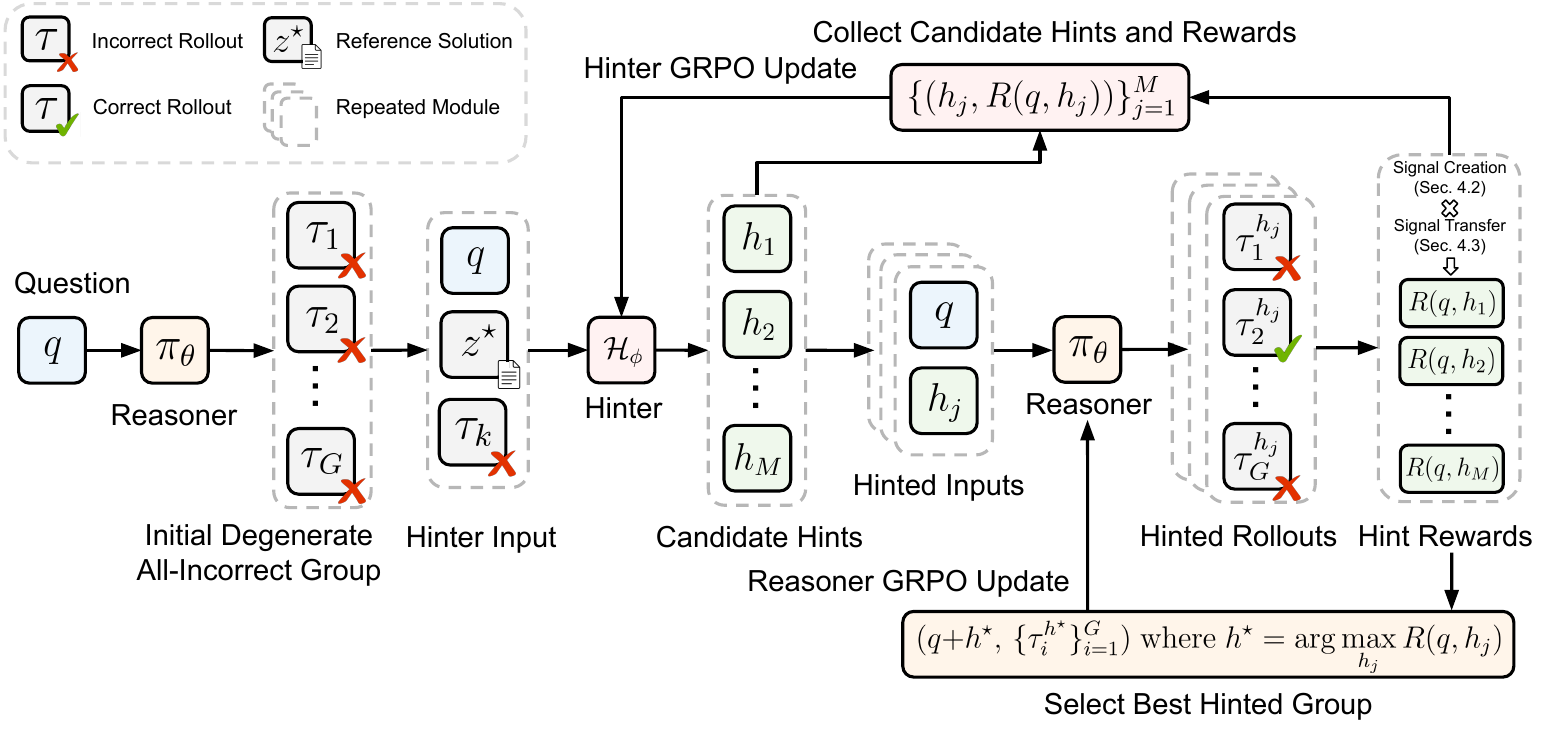}
    \vspace{-1em}
    \caption{{Overview of our HiLL framework.}
    Given a question~$q$ with an all-incorrect group, the hinter~$\mathcal{H}_\phi$ takes the question, a failed rollout~$\tau_k$, and the reference solution~$z^\star$ as input, and generates $M$ candidate hints.
    The reasoner~$\pi_\theta$ re-samples $G$ rollouts under each hinted input $q{+}h_j$.
    Each hint is then scored by {signal creation} (Sec.~\ref{subsec:signal_creation}) and {signal transfer} (Sec.~\ref{subsec:transfer}).
    The best hinted group is selected reasoner GRPO update, while all candidate hints form the group for the hinter GRPO update.}
    \label{fig:overview}
    \vspace{-0.35em}
\end{figure*}

Consider reasoning with verifiable rewards.
Given a question $q$, a reasoner policy $\pi_\theta$ samples a trajectory $\tau\sim\pi_\theta(\cdot\mid q)$, and a verifier returns a binary reward $r(\tau)\in\{0,1\}$ indicating whether the final answer is correct.
We denote the per-question success probability by $p_\theta(q)=\Pr_{\tau\sim\pi_\theta(\cdot\mid q)}[r(\tau)=1]$.

\textbf{GRPO and advantage collapse.}
Group Relative Policy Optimization~\citep{shao2024deepseekmath} samples a group of $G$ trajectories for the same question and computes within-group normalized advantages:
\begin{equation}
    A_i \;=\; \frac{r_i - \bar{r}}{\operatorname{std}(r_{1:G})+\epsilon},
    \qquad
    \bar{r} = \frac{1}{G}\sum_{i=1}^{G} r_i.
    \label{eq:grpo_adv}
\end{equation}
Under binary rewards, if all $G$ rollouts receive the same reward, every advantage vanishes and the question produces no gradient.
For a question with per-rollout success probability~$p$, the probability that a group of size~$G$ contains both correct and incorrect outcomes is
\begin{equation}
    s(p;\,G) \;=\; 1 - p^{G} - (1-p)^{G}.
    \label{eq:nondeg_prob}
\end{equation}
This \emph{non-degenerate probability} is maximized at $p=\tfrac{1}{2}$ and satisfies $s(p;G)\approx Gp$ when $p\approx 0$.
Hard questions with small success probability are therefore most likely to yield degenerate groups and produce no learning signal.


\section{HiLL: Hint Learning for Reinforcement Learning}
\label{sec:method}

HiLL jointly trains a hinter policy alongside the reasoner to recover learning signal from hard questions.
It intervenes only on all-incorrect GRPO groups, which produce no gradient under binary rewards.
As illustrated in Figure~\ref{fig:overview} and summarized in Algorithm~\ref{alg:hill}, the hinter generates candidate hints online, the reasoner re-samples under each hinted input, and hints are scored by both signal creation and signal transfer before the best hinted group replaces the original degenerate group.

\subsection{Failure-Conditioned Hint Generation}
\label{subsec:hint_generation}

Given a training batch $\mathcal{B}$, the reasoner first samples $G$ rollouts per question and identifies the all-incorrect subset $\mathcal{I}=\{q\in\mathcal{B}:\sum_{i=1}^{G}r_i(q)=0\}$.
For each $q\in\mathcal{I}$, the hinter policy $\mathcal{H}_\phi$ generates $M$ candidate   hints conditioned on the question~$q$, a randomly selected incorrect rollout~$\tau_k$ from the current reasoner's all-incorrect group, and a reference solution~$z^\star$ available only during training:
\begin{equation}
    \{h_j\}_{j=1}^M \;\sim\; \mathcal{H}_\phi(\cdot \mid q,\, \tau_k,\, z^\star).
\end{equation}
The incorrect rollout exposes the reasoner's current error mode, while the reference solution provides supervision for producing a targeted hint.
We instruct the hinter to analyze the reasoner's failure and output a concise pedagogical hint where the prompt is provided in Appendix~\ref{app:hint_generator_prompt}.

A candidate hint is invalid if it (i)~violates the required output format, (ii)~leaks the final answer or key intermediate computations, or (iii)~causes the concatenated input $q{+}h$ to exceed the maximum context length.
Invalid hints receive a fixed failure penalty $R_{\mathrm{fail}}<0$ as defined in Section~\ref{subsec:reward}.

\subsection{Hinted Rollouts and Signal Creation}
\label{subsec:signal_creation}

For each valid candidate hint~$h_j$, the reasoner re-samples $G$ rollouts from $\pi_\theta(\cdot\mid q{+}h_j)$ and computes rewards.
Let $\hat{p}_h = \frac{1}{G}\sum_{i=1}^{G} r_i^{(h_j)}$ denote the estimated success rate under the hinted input.
A hint creates useful signal when the resulting group is non-degenerate, i.e., contains both correct and incorrect rollouts.
We quantify this with the non-degenerate probability $s(\hat{p}_h;\,G) = 1 - \hat{p}_h^G - (1{-}\hat{p}_h)^G$ from Eq.~\eqref{eq:nondeg_prob}, which vanishes when $\hat{p}_h\in\{0,1\}$ and peaks at $\hat{p}_h=\tfrac{1}{2}$.

However, signal creation alone is not sufficient: a hint may produce mixed outcomes simply by making the problem much easier, without producing trajectories that remain useful once the hint is removed.
We therefore introduce a second criterion that measures whether the created signal transfers to the original no-hint question.

\begin{algorithm}[!t]
\caption{HiLL: Hint Learning for Reinforcement Learning}
\label{alg:hill}
\begin{algorithmic}[1]
\REQUIRE Batch $\mathcal{B}$, reasoner $\pi_\theta$, hinter
$\mathcal{H}_\phi$, group size $G$, hint candidates $M$, temperature $T$,
failure reward $R_{\mathrm{fail}}$
\STATE For each $q\in\mathcal{B}$, sample $G$ rollouts from
       $\pi_\theta(\cdot\mid q)$ and compute rewards
\STATE Identify all-incorrect questions
       $\mathcal{I}=\{q\in\mathcal{B}:\sum_{i=1}^G r_i(q)=0\}$
\FOR{each $q\in\mathcal{I}$}
    \STATE Form hinter input $c=(q,\,\tau_k,\,z^\star)$ using an incorrect
           rollout and reference solution
    \STATE Sample $M$ candidate hints
           $\{h_j\}_{j=1}^M\sim\mathcal{H}_\phi(\cdot\mid c)$
    \FOR{each candidate hint $h_j$}
        \IF{$h_j$ is invalid}
            \STATE Set $R(q,h_j)\leftarrow R_{\mathrm{fail}}$
        \ELSE
            \STATE Sample $G$ rollouts from $\pi_\theta(\cdot\mid q{+}h_j)$
                   and compute rewards
            \STATE Estimate
                   $\hat{p}_h=\frac{1}{G}\sum_{i=1}^G r_i^{(h_j)}$
            \IF{$0<\hat{p}_h<1$}
                \STATE Estimate $\hat\rho_c(q,h_j)$ from correct hinted
                       trajectories via Eq.~\eqref{eq:est_reliance}
            \ENDIF
            \STATE Compute $R(q,h_j)$ using Eq.~\eqref{eq:hill_reward}
        \ENDIF
    \ENDFOR
    \STATE Select $h^\star=\arg\max_{h_j}R(q,h_j)$
    \IF{$R(q,h^\star)>0$}
        \STATE $(q,\,\{\tau_i\}_{i=1}^G) \gets (q{+}h^\star,\,\{\tau_i^{h^\star}\}_{i=1}^G)$
    \ENDIF
\ENDFOR
\STATE Update reasoner $\pi_\theta$ using Eq.~\eqref{eq:reasoner} on the
       final selected groups
\STATE Update hinter $\mathcal{H}_\phi$ using Eq.~\eqref{eq:hinter}, where
       $M$ hints per $q\in\mathcal{I}$ form the group
\end{algorithmic}
\end{algorithm}

\subsection{Hint Reliance and Signal Transfer}
\label{subsec:transfer}

If correct hinted trajectories depend strongly on the hint, training on them may teach the reasoner to exploit the hint rather than improve its no-hint reasoning capability.
For example, if the reasoner fails to simplify a key expression, a hint like ``try factoring the left-hand side'' suggests a strategy the model could have found on its own, whereas ``note that $x^2{-}5x{+}6=(x{-}2)(x{-}3)$'' performs the critical step directly for the reasoner.
Correct trajectories from the latter depend largely on the hint and are unlikely to transfer.

\textbf{Hint reliance.}
To formalize this, we define the \emph{hint reliance} of a trajectory~$\tau$ sampled under $q{+}h$ as
\begin{equation}
    \rho(\tau;\, q, h)
    \;=\;
    \log\pi_\theta(\tau \mid q{+}h)
    \;-\;
    \log\pi_\theta(\tau \mid q).
    \label{eq:hint_reliance}
\end{equation}
A value near zero means $\tau$ is roughly equally likely with or without the hint, indicating low dependence. A large positive value means $\tau$ is much more likely under the hinted input, indicating strong reliance.
We average over correct hinted trajectories, which carry the informative GRPO signal:
\begin{equation}
    \rho_c(q,h)
    \;=\;
    \frac{1}{|\mathcal{C}|}
    \sum_{\tau\in\mathcal{C}} \rho(\tau;\,q,h),
    \qquad
    \mathcal{C}
    = \bigl\{\tau\sim\pi_\theta(\cdot\mid q{+}h) : r(\tau)=1\bigr\}.
    \label{eq:avg_reliance}
\end{equation}

The following result shows that this average hint reliance directly controls how well hinted success transfers to the no-hint setting.
\begin{proposition}[Transfer bound]
\label{prop:transfer}
For a question-hint pair $(q,h)$, let $P_h(\tau)=\pi_\theta(\tau\mid q{+}h)$ and $P(\tau)=\pi_\theta(\tau\mid q)$ denote the rollout distributions under the hinted and original inputs, with success probabilities $p_h=P_h\bigl(r(\tau){=}1\bigr)$ and $p=P\bigl(r(\tau){=}1\bigr)$.
If $p_h>0$, then
\begin{equation*}
    \rho_c(q,h)
    \;=\;
    \log\frac{p_h}{p}
    \;+\;
    D_{\mathrm{KL}}\!\bigl(
        P_h(\cdot\mid r{=}1)
        \;\big\|\;
        P(\cdot\mid r{=}1)
    \bigr),
    \label{eq:transfer_identity}
\end{equation*}
and therefore
\begin{equation*}
    p
    \;\ge\;
    p_h \cdot \exp\bigl(-\rho_c(q,h)\bigr).
    \label{eq:transfer_bound}
\end{equation*}
\end{proposition}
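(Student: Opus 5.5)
The plan is to read $\rho_c(q,h)$ in the population sense: the average over $\mathcal{C}$ in Eq.~\eqref{eq:avg_reliance} is replaced by its expectation under the conditional law $P_h(\cdot\mid r{=}1)$. That is, $\rho_c(q,h)=\mathbb{E}_{\tau\sim P_h(\cdot\mid r=1)}\bigl[\log P_h(\tau)-\log P(\tau)\bigr]$. The empirical average is an unbiased estimate of this quantity. The identity then follows from Bayes' rule applied to both distributions.

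\textbf{Step 1: factor each likelihood.} For any $\tau$ with $r(\tau)=1$ and $p_h>0$, we can write $P_h(\tau)=p_h\,P_h(\tau\mid r{=}1)$ and $P(\tau)=p\,P(\tau\mid r{=}1)$.

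\textbf{Step 2: split the log-ratio.} Substituting gives
\[
\log\frac{P_h(\tau)}{P(\tau)}=\log\frac{p_h}{p}+\log\frac{P_h(\tau\mid r{=}1)}{P(\tau\mid r{=}1)}.
\]

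\textbf{Step 3: take the expectation.} Averaging over $\tau\sim P_h(\cdot\mid r{=}1)$, the first term is constant. The second term is, by definition, $D_{\mathrm{KL}}\bigl(P_h(\cdot\mid r{=}1)\,\|\,P(\cdot\mid r{=}1)\bigr)$. This yields the stated identity.

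\textbf{Step 4: derive the bound.} Since KL divergence is nonnegative (Gibbs' inequality), the identity gives $\rho_c(q,h)\ge\log(p_h/p)$. Exponentiating and rearranging gives $p\ge p_h\exp(-\rho_c(q,h))$.

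The main obstacle is handling support issues, and I would state these conventions explicitly. If $p=0$, or if $P_h(\cdot\mid r{=}1)$ puts mass where $P(\cdot\mid r{=}1)$ vanishes, then $P(\tau)=0$ for some correct $\tau$ in the support. In that case $\rho_c=+\infty$ and the identity holds with both sides equal to $+\infty$. The bound then reads $p\ge 0$, which is trivially true.

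Conversely, if $p>0$ and the supports are compatible, every term is finite. For autoregressive LLM policies with softmax outputs, all probabilities are strictly positive on finite-length sequences, so this absolute-continuity condition holds automatically. I would note this in the proof so that the finite case is the generic one.
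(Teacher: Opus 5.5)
Your proposal is correct and follows the paper's proof step for step. The paper also factors $P_h(\tau)=p_h\,P_h(\tau\mid r{=}1)$ and $P(\tau)=p\,P(\tau\mid r{=}1)$ on correct trajectories, splits the log-ratio, takes the expectation under $P_h(\cdot\mid r{=}1)$, and then uses nonnegativity of the KL divergence. Two of your additions make explicit what the paper leaves implicit: reading $\rho_c$ as a population expectation, and treating the $p=0$ and non-absolutely-continuous cases. One caveat on the latter: when $p=0$ the conditional $P(\cdot\mid r{=}1)$ is undefined, so there the identity holds only under a convention, though the bound $p\ge 0$ is trivially true.
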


The proof is provided in Appendix~\ref{app:proof_transfer}. The identity decomposes hint reliance into two terms: the log-ratio of success probabilities and a KL divergence measuring how the distribution over correct trajectories shifts when the hint is added. Since the KL term is non-negative, the bound follows directly: the no-hint success probability $p$ is at least the hinted success probability $p_h$ discounted by $\exp(-\rho_c)$. Therefore, lower hint reliance yields a tighter bound and a stronger transfer guarantee.

\textbf{Practical estimator.}
In practice, we normalize $\rho(\tau;\,q,h)$ by the trajectory length~$|\tau|$ to reduce length bias across trajectories of varying lengths:
\begin{equation}
    \hat\rho_c(q,h)
    \;=\;
    \frac{1}{|\mathcal{C}|}
    \sum_{\tau\in\mathcal{C}}
    \frac{\rho(\tau;\,q,h)}{|\tau|}.
    \label{eq:est_reliance}
\end{equation}
Computing $\hat\rho_c$ requires scoring each correct hinted trajectory under both $q{+}h$ and~$q$, amounting to two teacher-forced forward passes of~$\pi_\theta$.

\subsection{Transfer-Weighted Hinter Reward}
\label{subsec:reward}

Building on the previous signal creation and transfer measure, we now define the reward for each candidate hint for hinter training:
\begin{equation}
    R(q,h)
    \;=\;
    \begin{cases}
        R_{\mathrm{fail}},
            & \text{if } h \text{ is invalid},
            \\[4pt]
        \displaystyle
        \underbrace{s(\hat{p}_h;\,G)\vphantom{\exp\biggl(
            -\frac{\max\bigl(\hat\rho_c(q,h),\;0\bigr)}{T}
        \biggr)}}_{\text{signal creation}}
        \;\cdot\;
        \underbrace{\exp\biggl(
            -\frac{\max\bigl(\hat\rho_c(q,h),\;0\bigr)}{T}
        \biggr)}_{\text{signal transfer}},
            & \text{otherwise},
    \end{cases}
    \label{eq:hill_reward}
\end{equation}
where $T>0$ is a temperature parameter.
The first term $s(\hat{p}_h;\,G)$ from Eq.~\eqref{eq:nondeg_prob} rewards hints that produce mixed-outcome groups and vanishes when all rollouts are correct or all incorrect.
The second term based on Eq.~\eqref{eq:est_reliance} serves as a transfer weight: it equals~$1$ when hint reliance is zero or negative; it decays as reliance increases.
Negative reliance means the correct trajectory is already at least as likely under the original question, so we treat it as fully transferable and do not further boost the reward.
Their product encourages hints that both recover non-zero GRPO signal and produce trajectories that transfer to the no-hint setting.
When $\hat{p}_h\in\{0,1\}$, signal creation is zero, so no hint reliance computation is needed.

\subsection{Joint Reasoner-Hinter Optimization}
\label{subsec:training}

After scoring all candidates, the best hint $h^\star=\arg\max_j R(q,h_j)$ is selected for each $q\in\mathcal{I}$.
If $R(q,h^\star)>0$, the batch entry is updated via replacement
$(q,\,\{\tau_i\}_{i=1}^G) \gets (q{+}h^\star,\,\{\tau_i^{h^\star}\}_{i=1}^G)$,
where $\tau_i^{h^\star}\!\sim\pi_\theta(\cdot\mid q{+}h^\star)$. Otherwise, the original entry is kept.

\textbf{Reasoner update.}
After replacement, each question $q\in\mathcal{B}$ is paired with an input $x_q$ (either $q$ or $q{+}h^\star$).
Note that when a hint is used, both the rollout sampling and the log-probability computation condition on the same hinted input $x_q = q{+}h^\star$, keeping the reasoner update on-policy with respect to the augmented context.
The reasoner is then updated with the standard GRPO objective:
\begin{equation}
    \mathcal{L}_{\mathrm{R}}(\theta)
    \;=\;
    -\mathbb{E}_{q\sim\mathcal{B}}
    \Biggl[\,
    \sum_{i=1}^{G}
    A_i
    \sum_{t=1}^{|\tau_i|}
    \log\pi_\theta\!\bigl(y_{i,t}\mid x_q,\,y_{i,<t}\bigr)
    \Biggr],
    \label{eq:reasoner}
\end{equation}
where $\tau_i\sim\pi_\theta(\cdot\mid x_q)$ and $A_i$ are within-group normalized advantages as in Eq.~\eqref{eq:grpo_adv}.

\textbf{Hinter update.}
For each $q\in\mathcal{I}$, the $M$ candidate hints form a GRPO group with rewards $\{R(q,h_j)\}_{j=1}^M$, and the hinter is updated via:
\begin{equation}
    \mathcal{L}_{\mathrm{H}}(\phi)
    \;=\;
    -\mathbb{E}_{q\sim\mathcal{I}}
    \Biggl[\,
    \sum_{j=1}^{M}
    A_j
    \sum_{t=1}^{|h_j|}
    \log\mathcal{H}_\phi(h_{j,t}\mid c_q,\,h_{j,<t})
    \Biggr],
    \label{eq:hinter}
\end{equation}
where $A_j$ are group-normalized advantages from the hint rewards, and $c_q=(q,\tau_k,z^\star)$.
Invalid candidates participate through $R_{\mathrm{fail}}$, allowing the hinter to learn to avoid producing them.

\textbf{Co-training dynamics.}
Because both policies are updated jointly, the hinter adapts as the reasoner improves.
Questions that become solvable leave~$\mathcal{I}$, while the remaining failures reflect the reasoner's current capability frontier.
The hinter is thus trained on an evolving distribution of reasoner failures and learns to adjust its hints to the reasoner's changing weaknesses, rather than relying on a fixed hinting strategy throughout training.

\section{Experiments}

\subsection{Experimental Setup}
\label{subsec:setup}

\textbf{Models.}
We evaluate our HiLL framework on two models as the reasoner policy: Llama-3.2-3B-Instruct~\citep{meta2024llama3} and Qwen2.5-7B-Instruct~\citep{yang2024qwen25}.
The hinter policy is initialized from Qwen3-4B-Instruct~\citep{yang2025qwen3} for all experiments.
Both policies are trained jointly via GRPO as described in Section~\ref{sec:method}.

\textbf{Training data.}
We use the same 15k-prompt subset of OpenR1-Math-220k~\citep{huggingface2025openr1} curated by \citet{liao2026sage}, drawn from NuminaMath~1.5~\citep{li2024numinamath} together with ground-truth answers and reference solutions.
No pass-rate filtering is applied, so questions span a wide difficulty range.

\textbf{Evaluation.}
We report Average@16 accuracy on six math reasoning benchmarks: AIME24~\citep{maa2024aime}, AIME25~\citep{maa2025aime}, AMC23~\citep{li2024numinamath}, MATH-500~\citep{hendrycks2021math}, Minerva Math~\citep{lewkowycz2022minerva}, OlympiadBench~\citep{he2024olympiadbench}, and two non-math benchmarks: GPQA-diamond~\citep{rein2024gpqa}, MMLU-Pro~\citep{wang2024mmlupro} for generalization assessment.
All reasoner models are evaluated with temperature~$0.6$, top-$p$~$0.95$, and maximum response length~$8{,}192$.
The hinter is never used at evaluation.

\textbf{Baselines.}
We compare against the following baselines trained on the same 15k prompts:
(1)~Base, the initial reasoner;
(2)~GRPO~\citep{shao2024deepseekmath}, standard RL without hints;
(3)~LUFFY~\citep{yan2025luffy}, which replaces one on-policy rollout with an off-policy trajectory from DeepSeek-R1;
(4)~Scaf-GRPO~\citep{zhang2025scaf}, which augments all-incorrect groups with hints from an external teacher model;
and (5)~SAGE~\citep{liao2026sage}, on-policy hinting with self-generated hints conditioned on reference solutions.
The implementation builds on the official codebase of \citet{liao2026sage}, and we report baseline results from their paper.
We additionally report HiLL$_{\text{w/o TW}}$, an ablated variant of HiLL that removes the transfer weight from the hinter reward (Eq.~\eqref{eq:hill_reward}), using only the non-degenerate probability as the reward for hinter training.

\textbf{Implementation details.}
We use verl~\citep{sheng2025verl} for RL training and vLLM~\citep{kwon2023vllm} for rollout generation and all experiments run on $8{\times}$B200 GPUs.
For reasoner policy training in all methods, we follow DAPO~\citep{yu2025dapo} to disable the KL penalty and apply Clip-Higher with $\epsilon_{\mathrm{low}}{=}0.2$, $\epsilon_{\mathrm{high}}{=}0.28$.
We train the reasoner for 500 steps with a batch size of 128, $G{=}8$ rollouts per prompt, and learning rate $10^{-6}$.
The maximum prompt and response lengths are set to 2{,}048 and 8{,}192 tokens.
We evaluate every 50 steps and select the reasoner checkpoint with the best Average@16 accuracy to report the results.
For hinter policy training in HiLL, the hinter generates $M{=}4$ candidate hints per all-incorrect question with a maximum prompt and response length being 10{,}240 and 1{,}024 tokens, transfer temperature $T{=}0.3$, failure penalty $R_{\mathrm{fail}}{=}{-}0.2$, and the same learning rate and clipping as the reasoner.
We use Ray~\citep{moritz2018ray} to co-locate reasoner and hinter models on the same GPU nodes. Since the co-training pipeline executes each phase sequentially as in Algorithm~\ref{alg:hill}, Ray idles one policy while the other is active, letting both FSDP-sharded models share the same devices without extra memory overhead compared to baseline training methods.
Prompts used for reasoner and hinter are provided in Appendix~\ref{app:hint_generator_prompt}.

\begin{table}[!t]
    \caption{Main results on in-distribution and out-of-distribution benchmarks. Best results are in {bold} and second-best are {underlined}. HiLL consistenctly outperforms GRPO and hint-based baselines.}
    \vspace{-1.25em}
    \label{tab:main_results}
    \begin{center}
    \resizebox{\linewidth}{!}{%
    \begin{tabular}{l|cccccc|ccc}
    \toprule
        \multirow{2}{*}{\textbf{Method}} & \multicolumn{6}{c}{\textbf{In-distribution}} & \multicolumn{3}{c}{\textbf{Out-of-distribution}} \\
        \cmidrule(lr){2-7} \cmidrule(lr){8-10}
        & \textbf{AIME24 / 25} & \textbf{AMC23} & \textbf{MATH-500} & \textbf{Minerva} & \textbf{Olympiad} & \textbf{Avg.} & \textbf{GPQA} & \textbf{MMLU-Pro} & \textbf{Avg.} \\
    \midrule

        \multicolumn{10}{l}{\textit{Llama-3.2-3B-Instruct}} \\
        \cmidrule(lr){1-10}
        Base & 6.5 / 0.6 & 22.8 & 44.7 & 17.8 & 14.2 & 17.8 & 17.9 & 27.0 & 22.5 \\
        GRPO & 6.7 / 0.8 & 29.5 & 52.1 & 20.5 & 21.8 & 21.9 & 26.3 & 39.8 & 33.1 \\
        LUFFY & 4.4 / 0.4 & 18.6 & 38.9 & 14.3 & 11.9 & 14.7 & 16.0 & 26.7 & 21.4 \\
        Scaf-GRPO & 7.7 / \textbf{2.3} & 28.8 & 51.7 & 19.4 & 19.5 & 21.5 & 24.1 & 38.0 & 31.0 \\
        {SAGE} & \textbf{9.2} / 0.8 & \underline{34.7} & 56.3 & 20.1 & 22.0 & \underline{23.9} & 27.3 & 40.7 & 34.0 \\
        \rowcolor[RGB]{242, 250, 246}{HiLL}$_\text{w/o TW}$ & 6.9 / \textbf{2.3} & 32.5 & \underline{56.7} & \underline{21.6} & \underline{22.4} & 23.7 & \textbf{28.0} & \underline{41.0} & \underline{34.5} \\
        \rowcolor[RGB]{228, 247, 237}{HiLL} & \underline{8.5} / \underline{1.7} & \textbf{34.8} & \textbf{57.7} & \textbf{21.7} & \textbf{23.2} & \textbf{24.6} & \underline{27.8} & \textbf{42.7} & \textbf{35.3} \\
    \midrule

        \multicolumn{10}{l}{\textit{Qwen2.5-7B-Instruct}} \\
        \cmidrule(lr){1-10}
        Base & 13.8 / 6.7 & 53.4 & 75.7 & 38.1 & 39.2 & 37.8 & 37.1 & 56.4 & 46.7 \\
        GRPO & 15.0 / 13.5 & 55.5 & 79.2 & 39.1 & 44.5 & 41.1 & 37.2 & 57.6 & 47.4 \\
        LUFFY & \textbf{17.1} / 13.5 & 55.2 & \underline{81.3} & 39.0 & 44.2 & 41.7 & 38.1 & 59.1 & 48.6 \\
        Scaf-GRPO & 14.6 / 12.7 & 58.8 & 78.0 & \underline{39.8} & 42.0 & 41.0 & 36.6 & 58.4 & 47.5 \\
        {SAGE} & 16.0 / 12.5 & 60.3 & 80.0 & 39.3 & \underline{45.9} & 42.3 & 38.0 & 59.3 & 48.6 \\
        \rowcolor[RGB]{242, 250, 246}{HiLL}$_\text{w/o TW}$ & 15.2 / \underline{14.4} & \underline{60.5} & 80.5 & \underline{39.8} & 45.7 & \underline{42.7} & \underline{38.6} & \underline{59.9} & \underline{49.2} \\
        \rowcolor[RGB]{228, 247, 237}{HiLL} & \underline{16.9} / \textbf{15.3} & \textbf{63.0} & \textbf{81.8} & \textbf{40.1} & \textbf{48.1} & \textbf{44.2} & \textbf{40.4} & \textbf{61.6} & \textbf{51.0} \\
    \bottomrule
    \end{tabular}%
    }
    \end{center}
    \vspace{-0.75em}
\end{table}

\subsection{Results and Analysis}

\textbf{HiLL consistently boosts reasoner performance compared to baselines.}
Table~\ref{tab:main_results} summarizes the main results. HiLL achieves the highest average accuracy on in-distribution math reasoning tasks on both Llama-3.2-3B-Instruct and Qwen2.5-7B-Instruct. The gains extend to out-of-distribution GPQA and MMLU-Pro despite training exclusively on math data, suggesting that the recovered learning signal in our HiLL framework also generalizes to broader reasoning capabilities. Among hint-based baselines, both Scaf-GRPO and SAGE improve over standard GRPO, confirming that hinting can recover useful signal from hard questions. HiLL outperforms both, indicating that learning to generate adaptive, transfer-aware hints is more effective than relying on fixed external hints or self-generated hints without transfer optimization. LUFFY underperforms on Llama-3.2-3B-Instruct, likely because injecting off-policy DeepSeek-R1 trajectories into a smaller model introduces a distribution mismatch that outweighs the benefit of additional correct rollouts.

\begin{figure}[t!]
    \centering
    \includegraphics[width=0.255\textwidth]{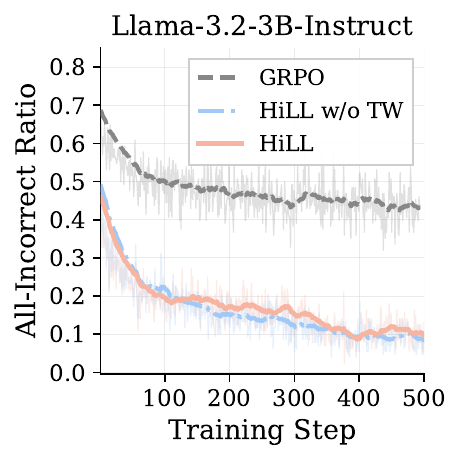}%
    \hspace{-0.5em}
    \includegraphics[width=0.254\textwidth]{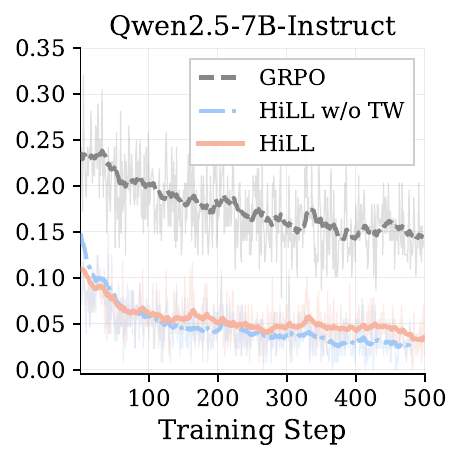}%
    \hspace{-0.5em}
    \includegraphics[width=0.255\textwidth]{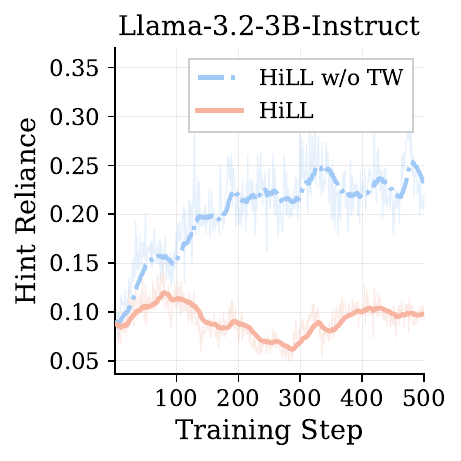}%
    \hspace{-0.5em}
    \includegraphics[width=0.254\textwidth]{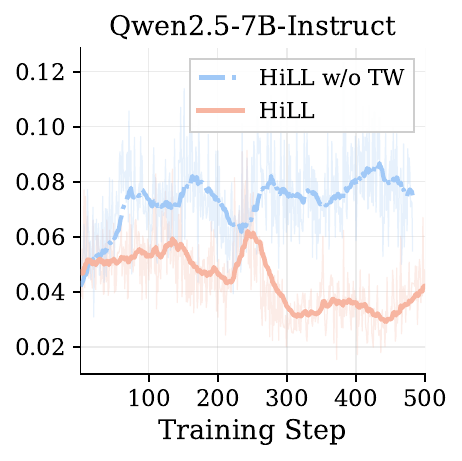}
    \vspace{-1.5em}
    \caption{All-incorrect ratio (left two) and hint reliance (right two) over training steps. Both HiLL variants substantially reduce the fraction of degenerate all-incorrect groups compared to GRPO. Transfer weighting (HiLL vs.\ HiLL$_\text{w/o TW}$) keeps hint reliance consistently lower throughout training.}
    \label{fig:training_dynamics}
    \vspace{-0.25em}
\end{figure}

\textbf{Transfer weighting keeps hint reliance low while recovering GRPO signals.}
Figure~\ref{fig:training_dynamics} tracks the all-incorrect ratio, i.e., the fraction of degenerate all-incorrect groups in the reasoner batch, and its hint reliance over training. Both HiLL variants substantially reduce this ratio compared to standard GRPO, confirming that learned hinting effectively recovers signal from hard questions. The key difference between the two variants appears in the hint reliance curves: without transfer weighting, the hinter is rewarded purely for creating mixed-outcome groups, and reasoner hint reliance climbs steadily as training progresses. With transfer weighting, HiLL keeps hint reliance consistently low, meaning the hinter learns to produce hints whose induced correct trajectories remain plausible under the original no-hint input. With the accuracy gains of HiLL over HiLL$_\text{w/o TW}$ in Table~\ref{tab:main_results}, this confirms that lower hint reliance translates into stronger transfer to the no-hint policy, as predicted by Proposition~\ref{prop:transfer}.

\begin{figure}[t!]
    \centering
    \vspace{-0.75em}
    \includegraphics[width=0.95\linewidth]{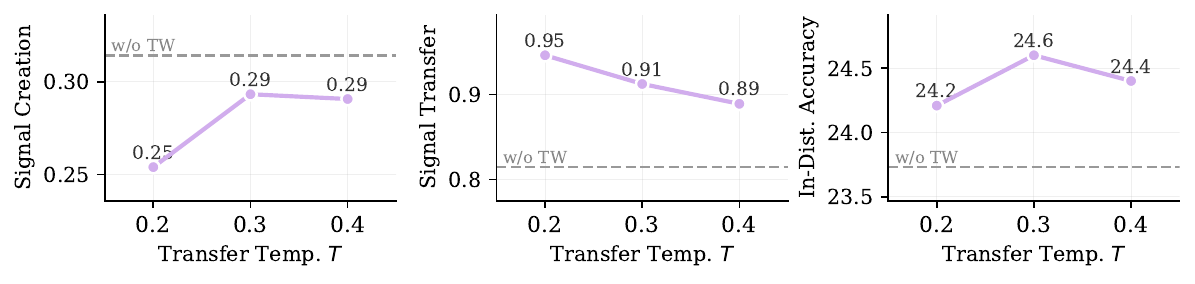}
    \vspace{-0.5em}
    \caption{Effect of transfer temperature $T$ on average signal creation, signal transfer, and in-distribution accuracy for HiLL with Llama-3.2-3B-Instruct. Dashed lines show HiLL$_\text{w/o TW}$.}
    \label{fig:transfer_temp}
    \vspace{-0.75em}
\end{figure}

\textbf{Transfer temperature controls the signal creation--transfer trade-off.}
The temperature $T$ in hinter reward defined in Eq.~\eqref{eq:hill_reward} controls how aggressively the transfer weight penalizes hint reliance. We show in Figure~\ref{fig:transfer_temp} its effect on Llama-3.2-3B-Instruct across three metrics: (i) signal creation, i.e., the average reduction in all-incorrect ratio after hinting where higher value means more degenerate groups in reasoner batch are converted to mixed-outcome groups, (ii) signal transfer, i.e., $\exp(-\hat{\rho}_c)$ averaged over training where higher value means correct hinted trajectories are more likely under the original no-hint input, and (iii) in-distribution accuracy, i.e., average reasoner performance over in-distribution math benchmarks. 
The results show that a smaller $T$ imposes a stronger penalty, yielding higher signal transfer but reducing signal creation as the hinter becomes over-constrained. A larger $T$ relaxes the penalty, creating more signal but with lower transferability. The choice of $T{=}0.3$ in our experiments provides a good balance between signal creation and signal transfer, achieving strong accuracy performance. Nevertheless, all three $T$ values tested in HiLL outperform HiLL$_\text{w/o TW}$, indicating that even a loose transfer weighting is beneficial for the reasoner training.

\begin{table}[t!]
\caption{Example hints generated by HiLL and HiLL$_\text{w/o TW}$ on the same question. HiLL provides conceptual strategies while HiLL$_\text{w/o TW}$ tends to set up the computation directly.}
\vspace{-1em}
\label{tab:hint_examples}
\begin{center}
\fontsize{8.5pt}{10pt}\selectfont
\resizebox{\linewidth}{!}{%
\begin{tabular}{p{0.24\linewidth}|p{0.38\linewidth}|p{0.38\linewidth}}
\toprule
\textbf{Question (excerpt)} & \textbf{Hint from HiLL} & \textbf{Hint from HiLL$_\text{w/o TW}$} \\
\midrule
A line segment $AB$ of length 1 moves on the $x$-axis. Point $P(0,1)$ is connected to $A$ by a line, and point $Q(1,2)$ is connected to $B$ by a line. Find the equation of the trajectory of the intersection point $R$.
& Consider expressing the intersection in terms of a parameter $a$, then eliminate it to reveal the hidden trajectory. \newline \textcolor[RGB]{113, 191, 107}{\scriptsize \textbf{(18 words, hinted pass rate 0.13)}}
& Let $A=(a,0)$ and $B=(a{+}1,0)$ for a real parameter $a\neq 0$. Write the equation of line $PA$, where $P=(0,1)$, and the equation of line $QB$, where $Q=(1,2)$, both in terms of $a$. Find the coordinates $(x,y)$ of their intersection point $R$ as a function of $a$.\ldots \newline \textcolor[RGB]{134, 150, 132}{\scriptsize \textbf{(108 words, hinted pass rate 0.38)}} \\
\midrule
In trapezoid $ABCD$, $AB \parallel CD$, $AB{=}33$, $CD{=}21$, the non-parallel sides are 10 and 14, and $\angle A$, $\angle B$ are both acute. Find the length of the shorter diagonal.
& The shorter diagonal emerges from a triangle where 14, 12, and 10 form a critical configuration, with the angle between them yielding a result just above 25. \newline \textcolor[RGB]{113, 191, 107}{\scriptsize \textbf{(27 words, hinted pass rate 0.13)}}
& Draw a line $DE\parallel BC$ from point $D$ to side $AB$, intersecting $AB$ at point $E$. This creates triangle $ADE$ with $DE=10$, $AE=12$, and $AD=14$. Use the Law of Cosines in triangle $ADE$ to find $\cos A$. Then apply the same angle $\angle A$ in \ldots \newline \textcolor[RGB]{134, 150, 132}{\scriptsize \textbf{(97 words, hinted pass rate 0.50)}} \\
\bottomrule
\end{tabular}%
}
\end{center}
\vspace{-1em}
\end{table}

\textbf{HiLL learns to hint more concisely and conceptually.}
Beyond the quantitative metrics above, transfer weighting also shapes the nature of the generated hints. We report in Figure~\ref{fig:hint_bars} the average hint length in words and the average number of math expressions (e.g., inline equations, symbols) per hint, both computed over hints collected on a fixed interval of 50 steps across training. 
The results show that HiLL produces shorter hints with fewer math expressions than HiLL$_\text{w/o TW}$ on both backbones.
Table~\ref{tab:hint_examples} illustrates the qualitative difference with two actual examples: 
HiLL hints suggest a 
\begin{wrapfigure}{r}{0.3\textwidth}
    \centering
    \vspace{-0.5em}
    \includegraphics[width=0.3\textwidth]{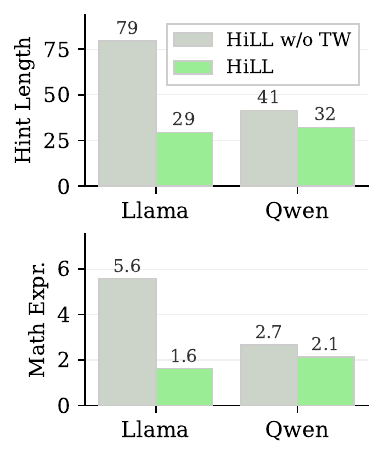}
    \vspace{-2em}
    \caption{Hint length and math expressions per hint.}
    \label{fig:hint_bars}
    \vspace{-1em}
\end{wrapfigure}
high-level strategy (e.g., ``parameterize then eliminate'' or ``look for a critical triangle configuration'') without carrying out the computation, whereas HiLL$_\text{w/o TW}$ hints tend to set up the algebra directly by defining coordinates, writing equations, or applying specific theorems. 
This behavior emerges naturally from our transfer-weighted hint learning objective. Hints that perform key steps for the reasoner would induce high hint reliance, as the resulting correct trajectories become unlikely under the original input. The transfer weight penalizes such hints (Eq.~\eqref{eq:hill_reward}), steering the hinter toward generating more conceptual guidance whose induced correct trajectories the reasoner could plausibly produce on its own.

\textbf{HiLL trades off latency for more effective learning signals.}
HiLL introduces additional per-step computation beyond standard GRPO: for each degenerate all-incorrect group in the batch, the hinter generates a hint, the reasoner re-samples trajectories conditioned on the hinted input, and hint reliance is estimated from the resulting trajectories. The hinter is then updated via its own GRPO step. Because these operations are triggered only for all-incorrect groups, the overhead scales directly with the frequency of such groups. On Llama-3.2-3B-Instruct, which has a high all-incorrect ratio throughout training as shown in Figure~\ref{fig:training_dynamics}, HiLL averages roughly $3.8\times$ the per-step wall-clock time of GRPO. On the stronger Qwen2.5-7B-Instruct backbone, which produces fewer degenerate groups, the multiplier drops to roughly $2.6\times$, comparable to SAGE's $2.3\times$ on the same backbone, as SAGE also targets collapsed groups by progressively trying up to $L{=}3$ hint levels and rolling out at each level until a correct trajectory is found. We view this as a practical trade-off: the extra computation targets exactly the groups from which GRPO extracts no learning signal, converting them into useful training data that drives the accuracy gains reported in Table~\ref{tab:main_results}.

\section{Conclusion}

In this paper, we introduced HiLL, a co-training framework that addresses two limitations of existing hint-based approaches to GRPO's advantage collapse.
First, rather than relying on fixed or externally generated hints, HiLL jointly trains a hinter policy that generates hints online conditioned on the reasoner's current failures, allowing hint generation to stay calibrated as the reasoner improves.
Second, rather than rewarding hints solely for creating mixed-outcome groups, HiLL introduces hint reliance as a measure of whether hinted success is likely to transfer back to the no-hint policy, and uses it to define a transfer-weighted reward that steers the hinter toward hints whose induced correct trajectories remain plausible without the hint.
Experiments across eight benchmarks and two backbone LLMs show that HiLL consistently outperforms GRPO and prior hint-based baselines.
Our analysis further reveals that transfer weighting keeps hint reliance low throughout training and naturally produces shorter, more conceptual hints.
Together, these results demonstrate the value of adaptive and transfer-aware hint learning for RL.

\bibliography{main}

@article{shao2024deepseekmath,
  title={Deepseekmath: Pushing the limits of mathematical reasoning in open language models},
  author={Shao, Zhihong and Wang, Peiyi and Zhu, Qihao and Xu, Runxin and Song, Junxiao and Bi, Xiao and Zhang, Haowei and Zhang, Mingchuan and Li, YK and Wu, Yang and others},
  journal={arXiv preprint arXiv:2402.03300},
  year={2024}
}

@misc{maa2024aime,
  author={{MAA Committees}},
  title={{AIME} Problems and Solutions},
  year={2024},
  url={https://artofproblemsolving.com/wiki/index.php/AIME_Problems_and_Solutions}
}

@misc{maa2025aime,
  author={{MAA Committees}},
  title={{AIME} Problems and Solutions},
  year={2025},
  url={https://artofproblemsolving.com/wiki/index.php/AIME_Problems_and_Solutions}
}

@article{wen2025rlvrboundary,
  title={Reinforcement learning with verifiable rewards implicitly incentivizes correct reasoning in base llms},
  author={Wen, Xumeng and Liu, Zihan and Zheng, Shun and Ye, Shengyu and Wu, Zhirong and Wang, Yang and Xu, Zhijian and Liang, Xiao and Li, Junjie and Miao, Ziming and others},
  journal={arXiv preprint arXiv:2506.14245},
  year={2025}
}

@article{guo2025deepseekr1,
  title={Deepseek-r1: Incentivizing reasoning capability in llms via reinforcement learning},
  author={Guo, Daya and Yang, Dejian and Zhang, Haowei and Song, Junxiao and Wang, Peiyi and Zhu, Qihao and Xu, Runxin and Zhang, Ruoyu and Ma, Shirong and Bi, Xiao and others},
  journal={arXiv preprint arXiv:2501.12948},
  year={2025}
}

@article{xiong2025reinforceada,
  title={Reinforce-ada: An adaptive sampling framework for reinforce-style llm training},
  author={Xiong, Wei and Ye, Chenlu and Liao, Baohao and Dong, Hanze and Xu, Xinxing and Monz, Christof and Bian, Jiang and Jiang, Nan and Zhang, Tong},
  journal={arXiv preprint arXiv:2510.04996},
  year={2025}
}

@article{chen2025nurl,
  title={Nudging the Boundaries of LLM Reasoning},
  author={Chen, Justin Chih-Yao and Peng, Becky Xiangyu and Choubey, Prafulla Kumar and Huang, Kung-Hsiang and Zhang, Jiaxin and Bansal, Mohit and Wu, Chien-Sheng},
  journal={arXiv preprint arXiv:2509.25666},
  year={2025}
}

@article{yao2025gvmraft,
  title={Optimizing Chain-of-Thought Reasoners via Gradient Variance Minimization in Rejection Sampling and RL},
  author={Yao, Jiarui and Hao, Yifan and Zhang, Hanning and Dong, Hanze and Xiong, Wei and Jiang, Nan and Zhang, Tong},
  journal={arXiv preprint arXiv:2505.02391},
  year={2025}
}

@article{nan2025ngrpo,
  title={Ngrpo: Negative-enhanced group relative policy optimization},
  author={Nan, Gongrui and Chen, Siye and Huang, Jing and Lu, Mengyu and Wang, Dexun and Xie, Chunmei and Xiong, Weiqi and Zeng, Xianzhou and Zhou, Qixuan and Li, Yadong and others},
  journal={arXiv preprint arXiv:2509.18851},
  year={2025}
}

@article{yu2025dapo,
  title={Dapo: An open-source llm reinforcement learning system at scale},
  author={Yu, Qiying and Zhang, Zheng and Zhu, Ruofei and Yuan, Yufeng and Zuo, Xiaochen and Yue, Yu and Dai, Weinan and Fan, Tiantian and Liu, Gaohong and Liu, Lingjun and others},
  journal={arXiv preprint arXiv:2503.14476},
  year={2025}
}

@article{zhang2025scaf,
  title={Scaf-GRPO: Scaffolded Group Relative Policy Optimization for Enhancing LLM Reasoning},
  author={Zhang, Xichen and Wu, Sitong and Zhu, Yinghao and Tan, Haoru and Yu, Shaozuo and He, Ziyi and Jia, Jiaya},
  journal={arXiv preprint arXiv:2510.19807},
  year={2025}
}

@article{zhang2025stephint,
  title={Stephint: Multi-level stepwise hints enhance reinforcement learning to reason},
  author={Zhang, Kaiyi and Lv, Ang and Li, Jinpeng and Wang, Yongbo and Wang, Feng and Hu, Haoyuan and Yan, Rui},
  journal={arXiv preprint arXiv:2507.02841},
  year={2025}
}

@misc{yang2024qwen25,
      title={Qwen2.5 Technical Report}, 
      author={An Yang and Baosong Yang and Beichen Zhang and Binyuan Hui and Bo Zheng and Bowen Yu and Chengyuan Li and Dayiheng Liu and Fei Huang and Haoran Wei and et al.},
      year={2025},
      eprint={2412.15115},
      archivePrefix={arXiv},
      primaryClass={cs.CL},
      url={https://arxiv.org/abs/2412.15115}, 
}

@article{meta2024llama3,
  title={Introducing meta llama 3: The most capable openly available llm to date},
  author={Meta, AI},
  journal={Meta AI},
  volume={2},
  number={5},
  pages={6},
  year={2024}
}

@inproceedings{rein2024gpqa,
  title={Gpqa: A graduate-level google-proof q\&a benchmark},
  author={Rein, David and Hou, Betty Li and Stickland, Asa Cooper and Petty, Jackson and Pang, Richard Yuanzhe and Dirani, Julien and Michael, Julian and Bowman, Samuel R},
  booktitle={First conference on language modeling},
  year={2024}
}

@inproceedings{moritz2018ray,
  title={Ray: A distributed framework for emerging $\{$AI$\}$ applications},
  author={Moritz, Philipp and Nishihara, Robert and Wang, Stephanie and Tumanov, Alexey and Liaw, Richard and Liang, Eric and Elibol, Melih and Yang, Zongheng and Paul, William and Jordan, Michael I and others},
  booktitle={13th USENIX symposium on operating systems design and implementation (OSDI 18)},
  pages={561--577},
  year={2018}
}

@inproceedings{kwon2023vllm,
  title={Efficient memory management for large language model serving with pagedattention},
  author={Kwon, Woosuk and Li, Zhuohan and Zhuang, Siyuan and Sheng, Ying and Zheng, Lianmin and Yu, Cody Hao and Gonzalez, Joseph and Zhang, Hao and Stoica, Ion},
  booktitle={Proceedings of the 29th symposium on operating systems principles},
  pages={611--626},
  year={2023}
}

@inproceedings{sheng2025verl,
  title={Hybridflow: A flexible and efficient rlhf framework},
  author={Sheng, Guangming and Zhang, Chi and Ye, Zilingfeng and Wu, Xibin and Zhang, Wang and Zhang, Ru and Peng, Yanghua and Lin, Haibin and Wu, Chuan},
  booktitle={Proceedings of the Twentieth European Conference on Computer Systems},
  pages={1279--1297},
  year={2025}
}

@article{wang2024mmlupro,
  title={Mmlu-pro: A more robust and challenging multi-task language understanding benchmark},
  author={Wang, Yubo and Ma, Xueguang and Zhang, Ge and Ni, Yuansheng and Chandra, Abhranil and Guo, Shiguang and Ren, Weiming and Arulraj, Aaran and He, Xuan and Jiang, Ziyan and others},
  journal={Advances in Neural Information Processing Systems},
  volume={37},
  pages={95266--95290},
  year={2024}
}

@inproceedings{he2024olympiadbench,
  title={Olympiadbench: A challenging benchmark for promoting agi with olympiad-level bilingual multimodal scientific problems},
  author={He, Chaoqun and Luo, Renjie and Bai, Yuzhuo and Hu, Shengding and Thai, Zhen and Shen, Junhao and Hu, Jinyi and Han, Xu and Huang, Yujie and Zhang, Yuxiang and others},
  booktitle={Proceedings of the 62nd Annual Meeting of the Association for Computational Linguistics (Volume 1: Long Papers)},
  pages={3828--3850},
  year={2024}
}

@article{lewkowycz2022minerva,
  title={Solving quantitative reasoning problems with language models},
  author={Lewkowycz, Aitor and Andreassen, Anders and Dohan, David and Dyer, Ethan and Michalewski, Henryk and Ramasesh, Vinay and Slone, Ambrose and Anil, Cem and Schlag, Imanol and Gutman-Solo, Theo and others},
  journal={Advances in neural information processing systems},
  volume={35},
  pages={3843--3857},
  year={2022}
}

@article{hendrycks2021math,
  title={Measuring mathematical problem solving with the math dataset},
  author={Hendrycks, Dan and Burns, Collin and Kadavath, Saurav and Arora, Akul and Basart, Steven and Tang, Eric and Song, Dawn and Steinhardt, Jacob},
  journal={arXiv preprint arXiv:2103.03874},
  year={2021}
}

@article{li2024numinamath,
  title={Numinamath: The largest public dataset in ai4maths with 860k pairs of competition math problems and solutions},
  author={Li, Jia and Beeching, Edward and Tunstall, Lewis and Lipkin, Ben and Soletskyi, Roman and Huang, Shengyi and Rasul, Kashif and Yu, Longhui and Jiang, Albert Q and Shen, Ziju and others},
  journal={Hugging Face repository},
  volume={13},
  number={9},
  pages={9},
  year={2024}
}

@article{huggingface2025openr1,
  title={Open r1: A fully open reproduction of deepseek-r1, January 2025},
  author={Face, Hugging},
  journal={URL https://github. com/huggingface/open-r1},
  volume={7},
  year={2025}
}

@article{yang2025qwen3,
  title={Qwen3 technical report},
  author={Yang, An and Li, Anfeng and Yang, Baosong and Zhang, Beichen and Hui, Binyuan and Zheng, Bo and Yu, Bowen and Gao, Chang and Huang, Chengen and Lv, Chenxu and others},
  journal={arXiv preprint arXiv:2505.09388},
  year={2025}
}

@article{zhang2025clpo,
  title={CLPO: Curriculum Learning meets Policy Optimization for LLM Reasoning},
  author={Zhang, Shijie and Sun, Guohao and Zhang, Kevin and Guo, Xiang and Guo, Rujun},
  journal={arXiv preprint arXiv:2509.25004},
  year={2025}
}

@article{shenfeld2026self,
  title={Self-Distillation Enables Continual Learning},
  author={Shenfeld, Idan and Damani, Mehul and H{\"u}botter, Jonas and Agrawal, Pulkit},
  journal={arXiv preprint arXiv:2601.19897},
  year={2026}
}

@article{hubotter2026sdpo,
  title={Reinforcement Learning via Self-Distillation},
  author={H{\"u}botter, Jonas and L{\"u}beck, Frederike and Behric, Lejs and Baumann, Anton and Bagatella, Marco and Marta, Daniel and Hakimi, Ido and Shenfeld, Idan and Buening, Thomas Kleine and Guestrin, Carlos and others},
  journal={arXiv preprint arXiv:2601.20802},
  year={2026}
}

@article{zhao2026opsd,
  title={Self-Distilled Reasoner: On-Policy Self-Distillation for Large Language Models},
  author={Zhao, Siyan and Xie, Zhihui and Liu, Mengchen and Huang, Jing and Pang, Guan and Chen, Feiyu and Grover, Aditya},
  journal={arXiv preprint arXiv:2601.18734},
  year={2026}
}

@inproceedings{deng2026hipo,
  title={HiPO: Self-Hint Policy Optimization for RLVR},
  author={Qiyuan, Deng and Chen, Kehai and Zhang, Min and Xu, Zhongwen},
  booktitle={The Fourteenth International Conference on Learning Representations}
}

@article{yan2025luffy,
  title={Learning to reason under off-policy guidance},
  author={Yan, Jianhao and Li, Yafu and Hu, Zican and Wang, Zhi and Cui, Ganqu and Qu, Xiaoye and Cheng, Yu and Zhang, Yue},
  journal={arXiv preprint arXiv:2504.14945},
  year={2025}
}

@article{parashar2025e2h,
  title={Curriculum reinforcement learning from easy to hard tasks improves LLM reasoning},
  author={Parashar, Shubham and Gui, Shurui and Li, Xiner and Ling, Hongyi and Vemuri, Sushil and Olson, Blake and Li, Eric and Zhang, Yu and Caverlee, James and Kalathil, Dileep and others},
  journal={arXiv preprint arXiv:2506.06632},
  year={2025}
}

@article{li2025knapsack,
  title={Knapsack rl: Unlocking exploration of llms via optimizing budget allocation},
  author={Li, Ziniu and Chen, Congliang and Yang, Tianyun and Ding, Tian and Sun, Ruoyu and Zhang, Ge and Huang, Wenhao and Luo, Zhi-Quan},
  journal={arXiv preprint arXiv:2509.25849},
  year={2025}
}

@article{xu2025pods,
  title={Not all rollouts are useful: Down-sampling rollouts in llm reinforcement learning},
  author={Xu, Yixuan Even and Savani, Yash and Fang, Fei and Kolter, J Zico},
  journal={arXiv preprint arXiv:2504.13818},
  year={2025}
}

@article{zheng2025greso,
  title={Act only when it pays: Efficient reinforcement learning for llm reasoning via selective rollouts},
  author={Zheng, Haizhong and Zhou, Yang and Bartoldson, Brian R and Kailkhura, Bhavya and Lai, Fan and Zhao, Jiawei and Chen, Beidi},
  journal={arXiv preprint arXiv:2506.02177},
  year={2025}
}

@article{qu2026pope,
  title={POPE: Learning to Reason on Hard Problems via Privileged On-Policy Exploration},
  author={Qu, Yuxiao and Setlur, Amrith and Smith, Virginia and Salakhutdinov, Ruslan and Kumar, Aviral},
  journal={arXiv preprint arXiv:2601.18779},
  year={2026}
}

@article{liao2026sage,
  title={Self-Hinting Language Models Enhance Reinforcement Learning},
  author={Liao, Baohao and Dong, Hanze and Xu, Xinxing and Monz, Christof and Bian, Jiang},
  journal={arXiv preprint arXiv:2602.03143},
  year={2026}
}

@inproceedings{
le2025zvp,
title={No Prompt Left Behind: Exploiting Zero-Variance Prompts in {LLM} Reinforcement Learning via Entropy-Guided Advantage Shaping},
author={Thanh-Long V. Le and Myeongho Jeon and Kim Vu and Viet Dac Lai and Eunho Yang},
booktitle={The Fourteenth International Conference on Learning Representations},
year={2026},
url={https://openreview.net/forum?id=kiXFIESZKv}
}

@article{mroueh2025grpoeffective,
  title={Reinforcement Learning with Verifiable Rewards: GRPO's Effective Loss, Dynamics, and Success Amplification},
  author={Mroueh, Youssef},
  journal={arXiv preprint arXiv:2503.06639},
  year={2025}
}
\bibliographystyle{plainnat}


\appendix

\section{Proof of Proposition~\ref{prop:transfer}}
\label{app:proof_transfer}

We restate Proposition~\ref{prop:transfer} for convenience.
\textit{For a question-hint pair $(q,h)$, let $P_h(\tau)=\pi_\theta(\tau\mid q{+}h)$ and $P(\tau)=\pi_\theta(\tau\mid q)$ denote the trajectory distributions under the hinted and original inputs, with success probabilities $p_h=P_h\bigl(r(\tau){=}1\bigr)$ and $p=P\bigl(r(\tau){=}1\bigr)$.
If $p_h>0$, then}
\[
\rho_c(q,h)
\;=\;
\log\frac{p_h}{p}
\;+\;
D_{\mathrm{KL}}\!\bigl(
    P_h(\cdot\mid r{=}1)
    \;\big\|\;
    P(\cdot\mid r{=}1)
\bigr),
\]
\textit{and therefore}
\[
p
\;\ge\;
p_h \cdot \exp\bigl(-\rho_c(q,h)\bigr).
\]

\begin{proof}
Let $\mathcal{S}=\{\tau:r(\tau)=1\}$ denote the set of correct trajectories.
For any $\tau\in\mathcal{S}$, write
\[
P_h(\tau)
\;=\;
p_h\;P_h(\tau\mid r{=}1),
\qquad
P(\tau)
\;=\;
p\;P(\tau\mid r{=}1).
\]
Therefore, the per-trajectory hint reliance (Eq.~\ref{eq:hint_reliance}) satisfies
\begin{align*}
    \rho(\tau;\,q,h)
\;&=\;
\log\frac{P_h(\tau)}{P(\tau)}\\
\;&=\;
\log\frac{p_h}{p}
\;+\;
\log\frac{P_h(\tau\mid r{=}1)}
         {P(\tau\mid r{=}1)}.
\end{align*}
Taking expectation under $P_h(\cdot\mid r{=}1)$ gives
\begin{align*}
    \rho_c(q,h)
\;&=\;
\log\frac{p_h}{p}
\;+\;
\mathbb{E}_{\tau\sim P_h(\cdot\mid r{=}1)}
\!\left[
    \log\frac{P_h(\tau\mid r{=}1)}
             {P(\tau\mid r{=}1)}
\right]\\
\;&=\;
\log\frac{p_h}{p}
\;+\;
D_{\mathrm{KL}}\!\bigl(
    P_h(\cdot\mid r{=}1)
    \;\big\|\;
    P(\cdot\mid r{=}1)
\bigr),
\end{align*}
which proves the identity.
Since
$D_{\mathrm{KL}}\!\bigl(
    P_h(\cdot\mid r{=}1)
    \,\big\|\,
    P(\cdot\mid r{=}1)
\bigr)\ge 0$,
we obtain
$\rho_c(q,h)\ge\log(p_h/p)$,
and rearranging yields
\[
p
\;\ge\;
p_h\cdot\exp\bigl(-\rho_c(q,h)\bigr).
\]
\end{proof}

\section{Prompts for Hinter and Reasoner}
\label{app:hint_generator_prompt}

This section lists all prompt templates used in HiLL. We do not set a system prompt for the hinter.
Note that for the hinted reasoner input, we simply append the hint after the original question without adding explicit bridging prose such as ``Here is a hint to help you:'' as used in \citet{liao2026sage}. We empirically observed that such phrasing increases hint reliance by inducing reasoner outputs like ``Given the hint, \ldots'' which are less likely under the no-hint input and thus less transferable.

\begin{promptbox}{User Prompt for Hint Generation}
You are a Pedagogical Hint Generator for a Mathematical Reasoner.

\bigskip
{TASK:} GENERATE A MINIMAL HINT

\bigskip
{OBJECTIVE}

The Reasoner failed the Original Question. Your goal is to generate a
minimal but useful hint that helps the Reasoner find the correct logical
path. Provide a strategic nudge, not a solution. The Reasoner must still
perform the deduction themselves.

\bigskip
{INPUTS}

1) Original Question: 

\texttt{\{original\_question\}}

\bigskip
2) Reasoner's Failed Attempt: 

\texttt{\{rollout\_responses\}}

\bigskip
3) Ground Truth Solution (Hidden from Reasoner; For Your Reference Only):

\texttt{\{ground\_truth\_solution\}}

\bigskip
{GUIDELINES}

1. Analyze the failure to identify the missing insight or misconception.

2. Provide a conceptual pointer, such as a relevant theorem, a structural
property, an alternative representation, or an intermediate goal.

3. Do not reveal the final answer or any specific numerical values,
formulas with substituted numbers, or computed intermediate steps.

\bigskip
{OUTPUT FORMAT (Follow strictly)}

<analysis>

[Briefly identify the logic breakdown and the necessary insight.]

</analysis>

\bigskip

<hint>

[Your concise hint --- 1 to 3 sentences maximum.]

</hint>
\end{promptbox}

\begin{promptbox}{System Prompt for Reasoner}
Please reason step by step, and put your final answer within \textbackslash boxed\{\}.
\end{promptbox}

\begin{promptbox}{User Prompt for Reasoner}
\texttt{\{question\}}
\end{promptbox}

\begin{promptbox}{User Prompt for Reasoner with Hint}
\texttt{\{question\}}

\bigskip
\texttt{\{hint\}}
\end{promptbox}

\end{document}